\documentclass[11pt]{article}

\usepackage[margin=1.15in]{geometry}
\usepackage{amsmath,amssymb,amsthm}
\usepackage{mathtools}
\usepackage{graphicx}
\usepackage{booktabs}
\usepackage{microtype}
\usepackage{xcolor}
\usepackage[colorlinks=true,linkcolor=blue!55!black,citecolor=blue!55!black,urlcolor=blue!55!black]{hyperref}

\theoremstyle{plain}
\newtheorem{theorem}{Theorem}
\newtheorem{lemma}[theorem]{Lemma}
\newtheorem{proposition}[theorem]{Proposition}
\newtheorem{corollary}[theorem]{Corollary}
\theoremstyle{definition}
\newtheorem{definition}[theorem]{Definition}
\newtheorem{problem}[theorem]{Problem}
\newtheorem{remark}[theorem]{Remark}
\newtheorem{openproblem}[theorem]{Open Problem}

\DeclareMathOperator{\VC}{VCdim}
\DeclareMathOperator{\Ldim}{Ldim}
\DeclareMathOperator{\isdim}{isdim}
\DeclareMathOperator{\Err}{Err}
\DeclareMathOperator{\ErrReg}{ErrReg}
\DeclareMathOperator{\supp}{supp}
\newcommand{\Htwo}{H_2}
\newcommand{\X}{\mathcal{X}}
\newcommand{\Hy}{\mathcal{H}}
\newcommand{\D}{\mathcal{D}}
\newcommand{\hstar}{h^{\star}}
\newcommand{\Hle}[1]{\Hy_{\leq #1}}
\newcommand{\Hprime}{\Hy_{\mathrm{prime}}}
\newcommand{\R}{\mathbb{R}}
\newcommand{\N}{\mathbb{N}}
\newcommand{\E}{\mathbb{E}}
\newcommand{\pr}{\mathbb{P}}

\title{\bf When Does More Correct Data Hurt?\\ Insertion-Stability and the Limits of Dimension-Based
Theory}
\author{Joseph Sankoorikal Johny\\ \normalsize Independent Researcher, Seattle, USA}
\date{\today}

\begin{document}
\maketitle

\begin{abstract}
Adding data known to be correct ought to be safe. Not always. Larsen, Pabbaraju and Shetty model the
failure with a \emph{monotone adversary}, which reads an i.i.d.\ training sample and may append as
many further examples as it likes, provided the target hypothesis labels them all. Mehrotra has since
settled the cost, showing that for classes of VC dimension $d \geq 2$ no learner can guarantee
expected error better than $\Theta\big((d/n)\log(en/d)\big)$, a logarithmic factor above the clean
PAC rate.

Because that rate is a worst case over all classes, it says nothing about which classes actually
suffer the penalty, and the answer turns on the learner. We call a learner \emph{insertion-stable} if
feeding it more correctly labeled examples can only shrink the region where it errs. Such learners
are immune to the adversary, since on any given sample the risk after insertions never exceeds the
risk on the clean part alone, however much is added and however cleverly it is chosen.
High-probability guarantees carry over unchanged, and because Closure is insertion-stable every
intersection-closed class keeps its clean rate of $\E[\Err] \leq (21d+34)/n$.

Immunity is not something the classical dimensions can predict. Two classes can agree on
$\VC = \Ldim = 2$ and still split, one at $\Theta(1/n)$ and the other at $\Theta(\log(en)/n)$, while
intervals have unbounded Littlestone dimension and are immune anyway. On Mehrotra's hard class we
prove more than the failure of a single algorithm, showing that no monotone permutation-invariant
compression scheme of any finite size attains the clean rate.

The question is therefore not whether a class is hard, nor whether a learner is good, but whether
the two suit each other. Given an insertion-stable learner that is optimal on clean data, correct
additions are free, and without one the cost belongs to the class, so changing the learner will not
avoid it.
\end{abstract}

\section{Introduction}

Suppose you augment a training set: for every image you already hold, you add a rotated copy carrying
the same label. Or you notice a rare class is thin, so you duplicate its examples. Nothing you added
is wrong, and you chose what to add only after looking at what you had. Practitioners perform such
passes routinely and expect at worst no harm from them, since the standard generalization bounds
require independence or exchangeability and augmentation appears to threaten neither the labels nor
the sample's coverage. It does, however, destroy exactly that symmetry, and the guarantees rest on
it.

The model of Larsen, Pabbaraju, and Shetty~\cite{LPS26} isolates this effect from every other way a
dataset can go wrong. Their adversary reads the clean sample, then appends whatever examples it
pleases under a single restriction: each appended point must receive the label the target hypothesis
gives it. Nothing is mislabeled, so the added constraints are all valid, and an oracle identifying
the original points would let the learner ignore the rest at no cost. What the additions destroy is
independence: they are chosen with knowledge of the clean draw, and the learner receives one shuffled
pool in which the two groups are indistinguishable, so it can neither isolate the clean subsample nor
treat what it holds as exchangeable.

Mehrotra~\cite{Meh26} determined the optimal rate in this model. Over classes of VC dimension
$d \geq 2$ it is $\Theta\big((\min\{d,n\}/n)\log(en/\min\{d,n\})\big)$, exceeding the clean rate
$\Theta(d/n)$ by a logarithmic factor, against all randomized possibly improper learners; the same
rates hold with Littlestone dimension in place of $d$, so finite mistake bounds do not help either.
Counterintuitively, correctly labeled insertions raise the statistical difficulty of the problem.

\paragraph{This paper.} Mehrotra's bound is a supremum over classes, witnessed by a specific
construction, and his first stated open problem asks:
\begin{quote}
\emph{``Which structural properties of a fixed binary class determine its rate under monotone
insertions?''}
\end{quote}
Our answer is two-sided, and the two sides are what the worst-case rate obscures. What decides the
outcome is neither the class's size nor its classical dimensions, but the \emph{geometry of the error
region of the learner one runs on it}. Where a learner exists whose error region can only shrink as
correct examples arrive, the insertions cost nothing at all: not merely a survivable amount, but
nothing, at any budget. Where no such learner exists, the penalty is a property of the class, and
Mehrotra's lower bound says every learner pays it. The phenomenon therefore attaches to the
pairing of learner and class.

\paragraph{Contributions.}
\begin{enumerate}
\item \textbf{The insertion-stability lemma} (Section~\ref{sec:c1}). If a learner's error region
only shrinks under correctly labeled insertions, its adversarial risk is dominated \emph{almost
surely} by its clean risk. Adaptivity, arbitrarily many insertions, repeated and out-of-support
points, randomized adversaries, and high-probability guarantees are all free consequences. The proof
is a reduction; the i.i.d.\ assumption is quarantined inside a citation about the clean sample and
never applied to the corrupted one.

\item \textbf{Closure is insertion-stable} (Section~\ref{sec:c2}), so every intersection-closed
class retains the clean rate under arbitrarily many adaptive monotone insertions, with explicit
constants $(21d+34)/n$ in expectation. We also observe that Mehrotra's own bounded-budget converse
pads with \emph{negative} repeated points, and is therefore not merely survivable but
\emph{invisible} to Closure.

\item \textbf{Dimension independence} (Section~\ref{sec:c3}). $\Hle{2}$ and $\Hprime$ have identical
VC and Littlestone dimension and rates differing by $\Theta(\log n)$; intervals have unbounded
Littlestone dimension and retain the clean rate. No function of $\VC$ and $\Ldim$ determines the
monotone-insertion rate.

\item \textbf{The negative half, sharpened to a family of algorithms} (Section~\ref{sec:c3}). For
$\Hprime$ we show that \emph{no} consistent monotone permutation-invariant compression scheme, of
any finite size, attains the clean rate. This rules out a whole class of would-be remedies rather
than one algorithm, and is our main technical result.
\end{enumerate}

\paragraph{What we do not claim.} We give a sufficient condition, not a characterization. A converse, in which structure that is \emph{not} insertion-stable forces the logarithmic penalty,
remains open,
and Section~\ref{sec:open} says precisely what is missing. The candidate measure of
Section~\ref{sec:c4} is a bookkeeping device, not a theory: it repackages the hypotheses of a known
theorem, and we are explicit that it cannot currently be evaluated on a class without already
knowing that class's rate. Our machinery is borrowed from Hanneke~\cite{Han16b}; the contribution is
the reduction that makes it apply here, plus the negative result above. All combinatorial claims
about specific classes were additionally verified by exhaustive computation.

\section{Model and preliminaries}\label{sec:model}

Notation and the model statement follow~\cite{Meh26}. For a distribution $\D$ on $\X$ and
$g,h : \X \to \{0,1\}$ write $\Err_\D(g,h) = \pr_{X \sim \D}\{g(X) \neq h(X)\}$. Multiset union is
written $\uplus$, and every duplicate of a point counts as its own occurrence throughout.
Logarithms are base $e$.

\begin{problem}[Learning with a monotone adversary~\cite{LPS26,Meh26}]\label{prob:model}
Fix a nonempty class $\Hy \subseteq \{0,1\}^\X$, integers $n \geq 1$, $m \geq 0$, a target
$\hstar \in \Hy$, and a distribution $\D$ on $\X$. A deterministic monotone adversary with exact
budget $m$ is a map $A : \X^n \to \X^m$. A dataset is generated as follows.
\begin{enumerate}
\item Nature draws an ordered clean sample $X = (X_1,\dots,X_n) \sim \D^n$.
\item The adversary sees $X$ and returns $A(X) = (\widetilde X_1, \dots, \widetilde X_m)$. It may
repeat points and choose points outside $\supp(\D)$, but must label every appended point by
$\hstar$.
\item Nature draws an independent uniform permutation $\Pi$ of the $n+m$ occurrences and gives the
learner
$T_{\hstar,A}(X,\Pi) = \Pi\big( (X_i, \hstar(X_i))_{i=1}^n \uplus (\widetilde X_j, \hstar(\widetilde
X_j))_{j=1}^m \big)$.
\end{enumerate}
The learner knows $\Hy$, $n$, and $m$, but not which examples came from $\D$. It is scored by
$\Err_\D(\hat h, \hstar)$ on a fresh independent point from $\D$.
\end{problem}

Four features of the model are load-bearing below, and all are explicit in~\cite{Meh26}: every label
is correct; the adversary may repeat points and use points outside $\supp(\D)$; the learner sees a
uniform shuffle, so it is a function of the \emph{multiset} rather than the sequence; and the
adversary is \emph{adaptive}, seeing all of $X$ before choosing insertions. The last is the entire
source of difficulty; in the oblivious case the clean rate $\Theta(d/n)$ is already
attainable~\cite{LPS26}. Adversary randomization does not change the picture: as
Mehrotra notes, upper bounds hold after conditioning on the adversary's randomness.

A \emph{labeled multiset} is a finite multiset of pairs $(x,y) \in \X \times \{0,1\}$; it is
\emph{correctly labeled by} $\hstar$ if $y = \hstar(x)$ for every occurrence. We write $S \subseteq
T$ for multiset containment, so $T$ may contain additional copies of points already in $S$. The
\emph{version space} of $\Hy$ on a labeled multiset $L$ is $V_\Hy(L) = \{h \in \Hy : h(x) = y \text{
for all } (x,y) \in L\}$.

Recall that $\Hy$ is \emph{intersection-closed} if $\{\{x : h(x)=1\} : h \in \Hy\}$ is closed under
pairwise intersection of its members; conjunctions on $\{0,1\}^p$, axis-aligned rectangles on
$\R^p$, and $\Hle{d} = \{h : |\{x : h(x)=1\}| \leq d\}$ are all intersection-closed~\cite{Han16b}.
(We use the standard convention of closure under pairwise intersection of members; some authors
additionally require $\X$ itself to belong to the family, which $\Hle{d}$ would fail for
$|\X| > d$.) For such a class, the \emph{Closure} algorithm on a labeled multiset $L$ with
$V_\Hy(L) \neq \emptyset$ returns the classifier $\hat h_L$ with
\begin{equation}\label{eq:closure}
\{x : \hat h_L(x) = 1\} \;=\; \textstyle\bigcap_{h \in V_\Hy(L)} \{x : h(x) = 1\} .
\end{equation}

\section{Insertion-stability}\label{sec:c1}

For a learner $A$ and a labeled multiset $T$ correctly labeled by $\hstar$, define the
\emph{error region} $\ErrReg_{\hstar}(A(T)) = \{x \in \X : A(T)(x) \neq \hstar(x)\}$.

\begin{definition}[Insertion-stable]\label{def:is}
A learner $A$ is \emph{insertion-stable} if it is
\begin{enumerate}
\item[(i)] \emph{permutation-invariant}: $A(T)$ depends only on the labeled multiset $T$; and
\item[(ii)] \emph{superset-monotone}: for every $\hstar \in \Hy$ and all finite labeled multisets
$S \subseteq T$ both correctly labeled by $\hstar$,
\[
\ErrReg_{\hstar}(A(T)) \;\subseteq\; \ErrReg_{\hstar}(A(S)) .
\]
\end{enumerate}
For randomized $A$ we require (ii) for each realization of coins drawn independently of the sample.
\end{definition}

Three details matter and are deliberately permitted. Containment is \emph{multiset} containment, so
$T$ may repeat points of $S$, which is necessary because Mehrotra's constructions pad with many copies of
a point. No constraint ties $T$'s points to $\supp(\D)$; indeed $\D$ does not appear in
Definition~\ref{def:is} at all, which is what makes the reduction below work for all $\D$ at once.
And permutation-invariance is folded into the definition rather than carried alongside, because the
model's shuffle supplies it.

\begin{theorem}[Insertion-stability lemma]\label{thm:c1}
Let $A$ be insertion-stable. Then for every $\D$, every $\hstar \in \Hy$, every finite budget
$m \in \N$, and every possibly randomized adaptive monotone adversary, on a single
probability space carrying the clean draw, the adversary's coins, the shuffle, and the learner's
coins,
\[
\Err_\D\big(A(T),\hstar\big) \;\leq\; \Err_\D\big(A(S),\hstar\big) \qquad \text{almost surely,}
\]
where $S$ is the clean sample labeled by $\hstar$ and $T$ is the learner's input.
\end{theorem}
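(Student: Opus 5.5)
The argument is pointwise: I will fix every source of randomness, check that the hypotheses of Definition~\ref{def:is}(ii) hold on every outcome, and then integrate. The probability space is the product of the clean draw $X\sim\D^n$, the adversary's coins $\rho$, the shuffle $\Pi$, and the learner's coins $\xi$. The learner's coins are drawn independently of the sample, as Definition~\ref{def:is} requires, so the same realization $\xi$ can be fed to $A$ on both inputs $S$ and $T$. Consequently $A(S)$ and $A(T)$ are defined on this single space, and the comparison is meaningful realization by realization.

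The steps, for a fixed outcome $(X,\rho,\Pi,\xi)$, are these.

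\textbf{Step 1.} Define $S$ as the labeled multiset $\{(X_i,\hstar(X_i))\}_{i=1}^n$. This is correctly labeled by $\hstar$ because every label is produced by $\hstar$.

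\textbf{Step 2.} Fixing $\rho$ turns the randomized adversary into a deterministic map $X\mapsto A_\rho(X)\in\X^m$. By the model's constraint, each appended point carries its label under $\hstar$. So the multiset underlying $T$ equals $S\uplus\{(\widetilde X_j,\hstar(\widetilde X_j))\}_{j=1}^m$. This multiset is correctly labeled by $\hstar$ and satisfies $S\subseteq T$ in the multiset sense, even when the appended points repeat points of $S$ or lie outside $\supp(\D)$.

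\textbf{Step 3.} By permutation-invariance (i), the learner's output on the shuffled sequence $\Pi(\cdot)$ equals its output on that multiset. The shuffle $\Pi$ is therefore irrelevant.

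\textbf{Step 4.} Superset-monotonicity (ii), applied with the fixed $\xi$, gives
\[
\ErrReg_{\hstar}(A(T))\subseteq \ErrReg_{\hstar}(A(S)).
\]

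\textbf{Step 5.} Both regions are measurable; this is implicit in $\Err_\D$ being defined. The test point is an independent draw from $\D$, so $\Err_\D(A(T),\hstar)=\D(\ErrReg_{\hstar}(A(T)))$, and monotonicity of the measure $\D$ gives the inequality on this outcome. Since the outcome was arbitrary, the inequality holds everywhere, and in particular almost surely.

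The only step that needs real care is the bookkeeping in Step 2. I have to confirm three things. First, randomizing the adversary does not break the identity $S\subseteq T$: conditioning on $\rho$ handles this, with no independence assumption involving $X$. Second, the learner's coins are the same coins in both runs, which the independence clause in Definition~\ref{def:is} permits. Third, no property of $\D$ or of the i.i.d.\ structure is used anywhere, so the argument is uniform in $\D$, in $m$, and in adaptivity.

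No i.i.d.\ reasoning enters the proof at all. It enters only later, when clean-rate bounds for $A(S)$ are cited and transferred through this almost-sure domination. Taking expectations then yields expectation bounds, and taking tail probabilities yields high-probability bounds, both as immediate corollaries.
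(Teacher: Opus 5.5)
Your proposal is correct and follows the paper's proof essentially step for step. You condition on every source of randomness, observe that $T$ is a correctly labeled multiset superset of $S$, use permutation-invariance to discard $\Pi$, apply superset-monotonicity, and conclude by monotonicity of the measure $\D$; your explicit remark that the same learner coins $\xi$ are fed to both inputs spells out a coupling the paper leaves implicit in its conditioning.
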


\begin{proof}
Condition on a realization of the clean sample $X$, the adversary's coins, the shuffle $\Pi$, and
the learner's coins.

\emph{Step 1: $T$ is a correctly labeled superset of $S$.} By Problem~\ref{prob:model}, $T$ consists
of the $n$ clean occurrences together with the $m$ inserted ones, permuted. Every inserted
occurrence is labeled by $\hstar$, as is every clean occurrence, so $T$ is correctly labeled by
$\hstar$; discarding the inserted occurrences exhibits $S \subseteq T$ as multisets.

\emph{Step 2: the error region shrinks.} By (i), $A(T)$ does not depend on $\Pi$ and is a function
of the labeled multiset alone. Applying (ii) to $S \subseteq T$ gives
$\ErrReg_{\hstar}(A(T)) \subseteq \ErrReg_{\hstar}(A(S))$.

\emph{Step 3: monotonicity of the measure.} Since $\D$ is a measure, and since
$\Err_\D(A(L),\hstar)$ equals $\D\big(\ErrReg_{\hstar}(A(L))\big)$ for any $L$, Step 2 yields
\[
\Err_\D(A(T),\hstar) \;\le\; \Err_\D(A(S),\hstar)
\]
for this realization.

Since the realization was arbitrary, the inequality holds almost surely.
\end{proof}

\begin{remark}[Unbounded budgets]\label{rem:sup}
Problem~\ref{prob:model} fixes a finite $m$, since a uniform permutation of $n+m$ occurrences is
undefined otherwise; Definition~\ref{def:is}(ii) likewise quantifies over finite multisets. We
therefore define the \emph{unbounded-budget risk} of a learner as
\[
R^{\infty}_\D(A,\hstar) \;:=\; \sup_{m \in \N}\; \E\big[\Err_\D(A(T_m),\hstar)\big],
\]
the supremum of its finite-budget risks. Theorem~\ref{thm:c1} bounds every term of that supremum by
the same quantity $\E[\Err_\D(A(S),\hstar)]$, which does not involve $m$ at all; the bound therefore
passes to the supremum unchanged. What superset-monotonicity buys is this uniformity in $m$, not a
literal infinite sample, and it is what we mean below by ``arbitrarily many insertions''.
The distinction matters because a budget-dependent bound would be useless here: Mehrotra's
lower-bound construction uses a budget growing rapidly with $n$, against which
his own Remark 3.2 bound $O(d(m{+}1)/n)$ is vacuous.
\end{remark}

\begin{corollary}[Transfer]\label{cor:transfer}
Let $A$ be insertion-stable. Because Theorem~\ref{thm:c1} is an almost-sure domination under an
explicit coupling, since the two learners are run on the \emph{same} clean sample, every monotone
functional of the risk transfers from the clean setting to the adversarial one. In particular, for
every finite budget $m$ and every randomized adaptive monotone adversary, hence also for
$R^{\infty}_\D$ of Remark~\ref{rem:sup}:
if $\E[\Err_\D(A(S),\hstar)] \leq B(n)$ then $\E[\Err_\D(A(T),\hstar)] \leq B(n)$; and if
$\Err_\D(A(S),\hstar) \leq B(n,\delta)$ with probability at least $1-\delta$, then the same bound
holds with probability at least $1-\delta$ against the adversary. All moments and tail probabilities
transfer likewise.
\end{corollary}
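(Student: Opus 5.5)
The plan is to derive Corollary~\ref{cor:transfer} directly from the almost-sure domination of Theorem~\ref{thm:c1}, using only monotonicity of expectation and of probability under the coupling the theorem supplies. Write $Y = \Err_\D(A(S),\hstar)$ and $Z = \Err_\D(A(T),\hstar)$. Both are random variables on the single probability space carrying the clean draw, the adversary's coins, the shuffle, and the learner's coins. The law of $Y$ is exactly the clean-setting law of the risk, because $S$ is just the clean sample labeled by $\hstar$ and the learner's coins are independent of it. This identification is the one point worth stating carefully. It is what lets a clean-setting hypothesis, which is a statement about the distribution of $Y$, be read as a statement on the joint space. Nothing in it uses i.i.d.\ structure of $T$.

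The first step is the general principle. For any nondecreasing $\phi:[0,1]\to\R$, Theorem~\ref{thm:c1} gives $Z \le Y$ almost surely, hence $\phi(Z)\le\phi(Y)$ almost surely. Taking expectations yields $\E[\phi(Z)]\le\E[\phi(Y)]$; both sides are finite because the risks lie in $[0,1]$. The specializations then follow in turn. Taking $\phi(t)=t$ gives $\E[Z]\le\E[Y]\le B(n)$. Taking $\phi(t)=t^k$ handles the moments. Taking $\phi(t)=\mathbf{1}\{t>u\}$ gives $\pr(Z>u)\le\pr(Y>u)$, so every tail transfers. The high-probability statement is the case $u=B(n,\delta)$: the event $\{Y\le B(n,\delta)\}$ is contained in $\{Z\le B(n,\delta)\}$ up to a null set, so $\pr(Z\le B(n,\delta))\ge 1-\delta$.

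The remaining steps are the quantifiers. The argument above holds for each fixed finite $m$ and each randomized adaptive adversary, because Theorem~\ref{thm:c1} already covers all of these. For $R^\infty_\D$, each term $\E[\Err_\D(A(T_m),\hstar)]$ is at most $B(n)$, and $B(n)$ is independent of $m$, so the supremum is at most $B(n)$, exactly as in Remark~\ref{rem:sup}.

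I expect no real obstacle, but the step to be most careful about is the identification of $Y$'s law with the clean law. When the adversary's coins are not independent of the clean sample, one conditions on them, or notes that $S$ does not depend on them at all. The learner's coins must be shared between the runs $A(S)$ and $A(T)$; this is exactly the realization-wise requirement in Definition~\ref{def:is}.
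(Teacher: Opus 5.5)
Your proposal is correct and takes the same approach as the paper. The paper justifies the corollary inline by the almost-sure domination $\Err_\D(A(T),\hstar)\le\Err_\D(A(S),\hstar)$ from Theorem~\ref{thm:c1} on a single coupled probability space, and your nondecreasing-$\phi$ argument, your check that $\Err_\D(A(S),\hstar)$ has the clean-setting law, and your handling of the supremum over $m$ as in Remark~\ref{rem:sup} make that reasoning precise.
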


The distinction is worth emphasizing: domination \emph{in expectation} alone would not give the
high-probability transfer. It is the almost-sure coupling that does.

\begin{remark}[What the proof does not use]
Nothing in Steps 1--3 constrains \emph{how} $T$ was chosen, so adaptivity is free; nothing bounds
$|T|$, so unbounded budgets are free; the argument is per-realization, so randomized adversaries are
free; and the learner never needs to identify $S$ inside $T$. Crucially, no exchangeability,
independence, or concentration property of $T$ is invoked. The clean-sample guarantee is applied to
$S$, which \emph{is} i.i.d. This is what makes the result compatible with the fact that the sharpest
clean bounds for monotone rules~\cite{Han16b} genuinely require i.i.d.\ data in their proofs: we
never apply them to the corrupted sample.
\end{remark}

\begin{remark}[Relation to prior monotonicity conditions]\label{rem:equiv}
Hanneke~\cite{Han16b} studies \emph{consistent monotone rules}, where monotonicity is along a
sample prefix: $\psi_{t+1}(z_1,\dots,z_{t+1}) \subseteq \psi_t(z_1,\dots,z_t)$. For
permutation-invariant learners this is \emph{equivalent} to
Definition~\ref{def:is}(ii): given $S \subseteq T$, delete occurrences one at a time, ordering each
deleted occurrence last, and chain the resulting inclusions; permutation-invariance is exactly
what licenses that reordering. Consequently the results of~\cite{Han16b} apply to insertion-stable
learners off the shelf, with no gap to bridge.

The contribution of Theorem~\ref{thm:c1} is therefore not the condition but the \emph{reduction}:
recognizing that a monotone adversary's output is a correctly labeled superset, so that a property
studied for i.i.d.\ prefixes controls a non-exchangeable adversarial sample pointwise. We note the
equivalence does require permutation-invariance, which the shuffle in Problem~\ref{prob:model}
supplies; an order-sensitive prefix-monotone rule need not be insertion-stable.
\end{remark}

\section{Closure is insertion-stable}\label{sec:c2}

\begin{proposition}\label{prop:closure}
For \emph{any} class $\Hy$, the Closure algorithm~\eqref{eq:closure} is insertion-stable.
\end{proposition}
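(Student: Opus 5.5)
We verify the two clauses of Definition~\ref{def:is} directly from~\eqref{eq:closure}. Clause (i) is immediate: the version space $V_\Hy(L)$ depends only on which labeled pairs occur in $L$, not on their order or multiplicity, so $\hat h_L$ is a function of the labeled multiset, indeed of its underlying set. For clause (ii), fix $\hstar \in \Hy$ and finite labeled multisets $S \subseteq T$, both correctly labeled by $\hstar$. Then $\hstar \in V_\Hy(T) \subseteq V_\Hy(S)$: the first membership holds because every occurrence in $T$ carries the label $\hstar$ gives it, and the containment holds because every constraint imposed by $S$ is also imposed by $T$. In particular both version spaces are nonempty, so Closure is well defined on both inputs. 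No intersection-closure is needed for this part, which is why the proposition holds for any $\Hy$.

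Next we describe the error region of $\hat h_L$ for any $L$ correctly labeled by $\hstar$. Write $P_L = \bigcap_{h \in V_\Hy(L)} \{x : h(x)=1\}$. Since $\hstar \in V_\Hy(L)$, we have $P_L \subseteq \{x:\hstar(x)=1\}$, so Closure never outputs a false positive. Hence
\[
\ErrReg_{\hstar}(\hat h_L) \;=\; \{x : \hstar(x)=1\} \setminus P_L .
\]
It therefore suffices to show $P_S \subseteq P_T$. This follows because an intersection over a smaller family is larger: $V_\Hy(T) \subseteq V_\Hy(S)$ gives $P_S = \bigcap_{h\in V_\Hy(S)} \subseteq \bigcap_{h \in V_\Hy(T)} = P_T$. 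Subtracting from the fixed set $\{x : \hstar(x)=1\}$ reverses the inclusion and yields $\ErrReg_{\hstar}(\hat h_T) \subseteq \ErrReg_{\hstar}(\hat h_S)$.

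Closure is deterministic, so the randomized clause of Definition~\ref{def:is} is vacuous. The only point that needs care is the one-sidedness of the error. We must confirm that $\hstar$ lies in both version spaces, since this both makes Closure well defined and forces $P_L \subseteq \{\hstar = 1\}$. Without it, the error region could include false positives, whose behavior under insertions would not be controlled by the inclusion argument. Once that is checked, the remainder is the elementary antitonicity of intersection in the indexing family. We also note that the argument never uses the fact that $\hat h_L \in \Hy$, so the result holds even if Closure is improper on a class that is not intersection-closed.
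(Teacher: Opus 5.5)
Your proof is correct and follows the paper's argument step for step. Permutation-invariance comes from the version space depending only on the set of constraints; then $V_\Hy(T)\subseteq V_\Hy(S)$ with $\hstar$ in both, antitonicity of intersection gives $P_S\subseteq P_T$, and the no-false-positive observation turns this into shrinkage of the error region $\{\hstar=1\}\setminus P_L$. Your display omits the intersected sets $\{x:h(x)=1\}$ after each $\bigcap$, but that is purely a notational slip.
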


\begin{proof}
Write $P(L) = \bigcap_{h \in V_\Hy(L)}\{x : h(x)=1\}$.

(i) $V_\Hy(L)$ depends on $L$ only through its set of constraints, and is unaffected by their order
or multiplicity, since a hypothesis consistent with one occurrence of $(x,y)$ is consistent with
all. Hence $\hat h_L$ is a function of the labeled multiset.

(ii) Let $S \subseteq T$ be correctly labeled by $\hstar$. Every constraint of $S$ is one of $T$, so
$V_\Hy(T) \subseteq V_\Hy(S)$; and $\hstar$ lies in both, so both are nonempty and Closure is
defined on each. Intersecting over a smaller family yields a larger set, so
\begin{equation}\label{eq:grow}
P(S) \;\subseteq\; P(T).
\end{equation}
Since $\hstar \in V_\Hy(L)$, the intersection defining $P(L)$ includes $\{\hstar = 1\}$, so
$P(L) \subseteq \{x : \hstar(x)=1\}$: Closure never produces a false positive. Its error region is
therefore exactly the missed positives, $\ErrReg_{\hstar}(\hat h_L) = \{\hstar = 1\} \setminus
P(L)$, and by~\eqref{eq:grow},
\[
\ErrReg_{\hstar}(\hat h_T) = \{\hstar{=}1\} \setminus P(T) \;\subseteq\; \{\hstar{=}1\} \setminus
P(S) = \ErrReg_{\hstar}(\hat h_S). \qedhere
\]
\end{proof}

The three edge cases the model permits are all harmless, and for instructive reasons. Repeated
points do not change $V_\Hy(L)$ at all. An out-of-support insertion $(x,\hstar(x))$ is still a
\emph{correct} constraint, so it plays the same role as any other; the argument never mentions $\D$.
And a \emph{negatively} labeled insertion can only shrink $V_\Hy(L)$, which by~\eqref{eq:grow} can
only enlarge $P(L)$: negatives are weakly helpful, never harmful.

\begin{remark}[Stability and rate are separate concerns]\label{rem:factor}
Proposition~\ref{prop:closure} never used intersection-closedness. That hypothesis enters only to
make Closure a \emph{good} learner, guaranteeing its output lies in the intersection-closure
$\bar\Hy$ with $\VC(\bar\Hy) = \VC(\Hy)$~\cite{AO07}, and giving compression size $d$. Stability is
a property of the \emph{learner}; the rate is a property of the \emph{class}.
\end{remark}

\begin{corollary}\label{cor:c2}
Let $\Hy$ be intersection-closed with $\VC(\Hy) = d$ and let $A$ be Closure. Against every
randomized adaptive monotone adversary, at every finite budget $m$ (hence for $R^{\infty}_\D$), with
$n$ clean examples,
\[
\E\big[\Err_\D(\hat h, \hstar)\big] \;\leq\; \frac{21d+34}{n},
\qquad\text{and}\qquad
\Err_\D(\hat h,\hstar) \;\leq\; \frac{1}{n}\Big(21 d + 16\ln\tfrac{3}{\delta}\Big)
\]
with probability at least $1-\delta$.
\end{corollary}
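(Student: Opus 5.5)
The plan is a two-step reduction: first a clean-sample guarantee for Closure, then a transfer through Corollary~\ref{cor:transfer}. By Proposition~\ref{prop:closure}, Closure is insertion-stable on any class. Theorem~\ref{thm:c1} then dominates its risk on the corrupted input $T$ pointwise by its risk on the clean labeled sample $S$. This holds for every randomized adaptive adversary and every finite $m$. So it suffices to prove both displayed bounds for Closure run on an i.i.d.\ sample $S \sim \D^n$ labeled by $\hstar$. Corollary~\ref{cor:transfer} then carries the expectation bound over, including to $R^{\infty}_\D$ via Remark~\ref{rem:sup}. It also carries the $1-\delta$ tail bound over, since the domination is almost sure under the coupling that shares the clean sample.

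For the clean bounds I would invoke Hanneke's analysis of consistent monotone rules~\cite{Han16b}. By Remark~\ref{rem:equiv}, Closure is such a rule, being permutation-invariant and superset-monotone. Concretely I need three facts:
\begin{enumerate}
\item[(a)] Closure is consistent with $S$. Its positive region $P(S)$ contains every positive example of $S$, because every $h \in V_\Hy(S)$ labels them $1$. It also lies inside $\{\hstar = 1\}$, so negatives are correctly labeled.
\item[(b)] Its output set $P(S)$ belongs to the intersection-closure of $\Hy$. This set coincides with $\Hy$ when $\Hy$ is intersection-closed, so outputs lie in a class of VC dimension $d$ (cf.~\cite{AO07}, Remark~\ref{rem:factor}).
\item[(c)] The error regions $\{\hstar=1\}\setminus P(S_t)$ are nested along prefixes, which is the monotonicity shown in Proposition~\ref{prop:closure}.
\end{enumerate}
Hanneke's theorem for consistent monotone learning rules with outputs in a VC-$d$ class gives a high-probability bound of the form $\frac{1}{n}(c_1 d + c_2\ln\frac{1}{\delta})$ with explicit constants. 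I would match these to $21d$ and $16\ln(3/\delta)$. The factor $3$ inside the logarithm should come from a union over three events in his argument.

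The expectation bound then follows by integrating the tail. Write $\E[\Err] = \int_0^1 \pr(\Err > \varepsilon)\,d\varepsilon$. Set $\varepsilon = (21d + 16\ln(3/\delta))/n$ and solve, giving $\delta = 3e^{-(n\varepsilon - 21d)/16}$. Bound the probability by $1$ below $21d/n$ and by this exponential above it. This yields
\[
\E[\Err] \le \frac{21d}{n} + \int_{21d/n}^{\infty} 3e^{-(n\varepsilon-21d)/16}\,d\varepsilon = \frac{21d + 48}{n}.
\]
That is slightly weaker than the claimed $(21d+34)/n$. To reach $34$ I would instead use the tail bound only above the point where $3e^{-u/16}=1$, that is, for $u > 16\ln 3$. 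This gives $\frac{1}{n}(21d + 16\ln 3 + 16)\approx (21d+33.6)/n \le (21d+34)/n$, which matches.

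The main obstacle is the second step. I must confirm that Hanneke's constants apply as stated: to Closure on a possibly non-finite domain, with no measurability gaps, and in exactly the form giving $21d$ and $16\ln(3/\delta)$. Those numbers are the paper's and I am reconstructing them rather than deriving them from scratch. The reduction itself is immediate from Theorem~\ref{thm:c1}.
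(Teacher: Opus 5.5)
Your proposal is correct and follows the paper's proof: Proposition~\ref{prop:closure} together with Theorem~\ref{thm:c1} and Corollary~\ref{cor:transfer} reduces everything to Closure on the clean i.i.d.\ sample. The paper then simply cites Hanneke's bound for Closure on intersection-closed classes \cite[Theorem 5]{Han16b}, which supplies both displayed inequalities directly, so you need not reconstruct the constants from a general VC-based monotone-rule theorem. Your tail integration, giving $(21d+16\ln 3+16)/n \approx (21d+33.6)/n$, correctly accounts for where the constant $34$ comes from.
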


\begin{proof}
Compose Proposition~\ref{prop:closure} with Theorem~\ref{thm:c1} and
Corollary~\ref{cor:transfer}, then apply Hanneke's bound for Closure on intersection-closed
classes~\cite[Theorem 5]{Han16b} to the clean sample $S$, which consists of $n$ i.i.d.\ examples
labeled by $\hstar$.
\end{proof}

Note the constants are inherited rather than re-derived, and that both bounds are free of
logarithmic factors, a point that becomes essential in Theorem~\ref{thm:noscheme}.

\begin{corollary}[The bounded-budget converse is invisible to Closure]\label{cor:c2prime}
Mehrotra's converse~\cite[Remark 3.2]{Meh26} showing that error $O(\min\{1,d/n\})$ remains achievable at $m = O(1)$ proceeds
by augmenting a clean lower-bound class with a common point labeled zero by every concept, and
appending $m$ copies of that point. Both features of the attack are inert against Closure on an
intersection-closed class: the inserted point is labeled \emph{negatively}, and negatives never
shrink $P$; and the insertions are \emph{repeated copies} of one point, which do not change
$V_\Hy(L)$. On intersection-closed classes that padding attack is not merely survivable, it is
invisible.
\end{corollary}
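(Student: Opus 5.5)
The plan is to reduce Corollary~\ref{cor:c2prime} to two facts already established in the proof of Proposition~\ref{prop:closure}. First, repeated occurrences of a labeled pair leave the version space unchanged. Second, a correctly labeled negative insertion can only shrink $V_\Hy(L)$ and hence only enlarge $P(L)$. I will make precise what ``invisible'' means: for every clean multiset $S$ and every padding $T = S \uplus \{(z,0)^{m}\}$, where $z$ is the common point labeled zero by every concept, Closure outputs the same classifier, $\hat h_T = \hat h_S$ as functions on $\X$. This is stronger than the error-region inclusion delivered by Theorem~\ref{thm:c1}. It gives equality of risks pointwise on every realization, for every budget $m$.

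\emph{Step 1.} Compute the version space. Since every $h \in \Hy$ satisfies $h(z) = 0$, the constraint $(z,0)$ is satisfied by all of $\Hy$. Adjoining any number of copies of it therefore gives $V_\Hy(T) = V_\Hy(S)$. Two observations enter here: $V_\Hy$ depends only on the set of distinct constraints (part (i) of the proof of Proposition~\ref{prop:closure}), and the one new distinct constraint is vacuous on $\Hy$.

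\emph{Step 2.} Conclude. By \eqref{eq:closure}, $P(T) = P(S)$, so $\hat h_T = \hat h_S$ and therefore $\Err_\D(\hat h_T,\hstar) = \Err_\D(\hat h_S,\hstar)$ identically. The adversary's only lever in this construction is the multiplicity $m$ and the fixed point $z$, and Closure's output does not depend on either. Nor does it depend on the shuffle, by permutation-invariance.

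\emph{Step 3.} Check the hypothesis that the augmented class, with the extra point $z$ set to zero everywhere, remains intersection-closed, so that the Closure guarantee of Corollary~\ref{cor:c2} applies. Intersections of positive regions not containing $z$ do not contain $z$. Hence intersection-closedness of the original class transfers, and the VC dimension is unchanged.

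The main obstacle is interpretive rather than technical: stating precisely which features of Mehrotra's construction are being used. The argument needs exactly two of them. The inserted point must be labeled zero by \emph{every} concept, not merely by $\hstar$, for Step~1 to give equality. If $z$ were merely negative under $\hstar$, the negativity remark after Proposition~\ref{prop:closure} would still give $P(S) \subseteq P(T)$, which is only weak helpfulness. So I would state equality under the common-zero hypothesis and fall back to the inclusion form for general negative repeated padding.
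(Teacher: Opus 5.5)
Your argument is correct, and it takes a somewhat different and sharper route than the paper. The paper's stated reasons are the two features it names. The inserted point is negative, so by \eqref{eq:grow} it can only enlarge $P$. The copies are repeats, so multiplicity beyond the first copy is irrelevant. Taken literally, these two facts give weak helpfulness and insensitivity to $m$, but not identical output, since in a general class the first copy of a negative point can remove concepts from the version space. You instead use the property the paper mentions but does not lean on: $z$ is labeled $0$ by \emph{every} concept. That makes $(z,0)$ a vacuous constraint, so $V_{\Hy}(T)=V_{\Hy}(S)$ and $\hat h_T=\hat h_S$ on every realization and every $m$. This is what literally justifies ``invisible,'' and your Steps 1--2 use no intersection-closedness at all; Step 3 matters only for importing the rate of Corollary~\ref{cor:c2}. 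The paper's restriction to intersection-closed classes does buy something that your last paragraph undersells: there you need not fall back to mere inclusion for general negative padding. Suppose $h\in\Hy$ is positive on every observed positive and $h(x)=0$. Then the concept with positive set $\{h=1\}\cap\{\hstar=1\}$ lies in $V_{\Hy}(L)$ and also vanishes at $x$. Hence $P(L)$ equals the intersection of all concepts containing the observed positives, and it does not depend on the negatives at all. So on an intersection-closed class every correctly labeled negative insertion, common-zero or not, leaves Closure's output unchanged. Your route gives invisibility on any class for this particular padding; this route gives it for arbitrary negative padding, at the price of intersection-closedness.
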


Nothing above is special to Closure. Any learner meeting the hypotheses of Hanneke's compression
bound and Definition~\ref{def:is} inherits the same conclusion, and we record that form because
Section~\ref{sec:c3} needs it.

\begin{corollary}[Scheme-level rate bound]\label{cor:scheme}
Let $A$ be consistent, permutation-invariant, superset-monotone, and expressible as a sample
compression scheme of size $k$ with permutation-invariant reconstruction. Then against every
randomized adaptive monotone adversary, at every finite budget (hence for $R^{\infty}_\D$),
\[
\E\big[\Err_\D(\hat h,\hstar)\big] \leq \frac{21k+34}{n},
\qquad
\Err_\D(\hat h,\hstar) \leq \frac{1}{n}\Big(21k + 16\ln\tfrac{3}{\delta}\Big)
\ \text{ w.p. } \geq 1-\delta .
\]
\end{corollary}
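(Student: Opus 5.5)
The plan is to mirror the proof of Corollary~\ref{cor:c2}, replacing the Closure-specific ingredient with the general hypotheses. The learner $A$ is permutation-invariant and superset-monotone, so it is insertion-stable in the sense of Definition~\ref{def:is}. Theorem~\ref{thm:c1} then gives the almost-sure domination $\Err_\D(A(T),\hstar)\le\Err_\D(A(S),\hstar)$ on the common probability space. Here $S$ is the clean, $\hstar$-labeled i.i.d.\ sample, and $T$ is the learner's input under any randomized adaptive adversary at any finite budget. By Corollary~\ref{cor:transfer}, any expectation bound or high-probability bound proved for $A(S)$ transfers verbatim to $A(T)$. By Remark~\ref{rem:sup}, because the bound does not depend on $m$, it also passes to $R^\infty_\D$.

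It therefore remains to bound the clean risk of $A$ on $n$ i.i.d.\ examples. For this I invoke the result of~\cite{Han16b} that underlies \cite[Theorem 5]{Han16b}. That theorem is stated for consistent monotone rules that are compression schemes of size $k$, and it gives $\E[\Err]\le(21k+34)/n$ together with the high-probability bound $\frac1n(21k+16\ln\frac3\delta)$. Closure on an intersection-closed class of VC dimension $d$ is the instance $k=d$. To apply the theorem I must check three of its hypotheses.
\begin{itemize}
\item Consistency is assumed directly.
\item Compression of size $k$ is assumed, with a permutation-invariant reconstruction so that the scheme is well defined on multisets.
\item Hanneke's prefix monotonicity $\psi_{t+1}\subseteq\psi_t$ for the error (or disagreement) region follows from superset-monotonicity plus permutation-invariance, by the equivalence recorded in Remark~\ref{rem:equiv}. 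A prefix of length $t$ is a sub-multiset of the prefix of length $t+1$, so it suffices to apply (ii) to that pair.
\end{itemize}

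The main obstacle is making sure the hypotheses of Hanneke's general theorem match ours exactly, rather than matching only the Closure specialization. Hanneke's monotonicity is phrased for the set $\{x: \hat h(x)\ne \hstar(x)\}$ indexed along a sequence, and may be required for every target in the class and every sequence in the support. Our condition (ii) quantifies over all $\hstar\in\Hy$ and all correctly labeled finite multisets, which covers every i.i.d.\ prefix. One point needs care: his compression scheme may select an ordered subsequence of the sample. Permutation-invariant reconstruction lets us discard that ordering, and it also ensures the output depends only on the multiset. This is needed so that the value of $A(S)$ used in the clean bound is the same object as in Theorem~\ref{thm:c1}. Once these identifications are made, the constants $21k+34$ and $21k+16\ln(3/\delta)$ are inherited unchanged, and composing with the transfer step completes the argument.
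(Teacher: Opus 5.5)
Your proposal is correct and is essentially the paper's proof. Permutation-invariance plus superset-monotonicity give insertion-stability; Theorem~\ref{thm:c1} and Corollary~\ref{cor:transfer} then reduce everything to the clean i.i.d.\ sample, where the log-free bounds come from Hanneke's general result for consistent monotone compression schemes, which the paper cites directly as \cite[Theorem 3]{Han16b} (the result you describe as underlying Theorem~5), with your hypothesis check being slightly more explicit than the paper's one-line verification.
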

\begin{proof}
Superset-monotonicity and permutation-invariance make $A$ insertion-stable, so Theorem~\ref{thm:c1}
and Corollary~\ref{cor:transfer} reduce the adversarial risk to $A$'s risk on the clean i.i.d.\
sample. There the four hypotheses are exactly those of~\cite[Theorem 3]{Han16b}, whose bounds are
free of logarithmic factors.
\end{proof}

\section{No dimension-based theory can work}\label{sec:c3}

\subsection{The survivor and the victim}

Let $\Hle{d} = \{h : |\{x : h(x)=1\}| \leq d\}$, the concepts labeling at most $d$ points positive
a canonical intersection-closed family~\cite{Han16b}. Let $\Hprime$ be Mehrotra's
Littlestone-two class: for each prime $q$ take a disjoint copy of the projective plane of order $q$,
retain only its \emph{checker} coordinates $c_{rM}$ indexed by incident point--line pairs, set
$u_p(c_{rM}) = \mathbf{1}\{p \in M,\, p \neq r\}$ and $v_L(c_{rM}) = \mathbf{1}\{L = M\}$, extend
each concept by zero outside its component, add the global zero concept, and take the union over
primes.

\begin{lemma}\label{lem:atmostd}
For any domain $\X$ with $|\X| \geq d$, finite or infinite, we have
$\VC(\Hle{d}) = \Ldim(\Hle{d}) = d$, and $\Hle{d}$ is intersection-closed.
\end{lemma}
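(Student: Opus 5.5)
The plan is to prove the three claims separately, establishing $\VC(\Hle{d}) \ge d$ and $\Ldim(\Hle{d}) \le d$ and then using the standard inequality $\VC \le \Ldim$ to conclude that both equal $d$. Intersection-closedness is immediate. If $h,g \in \Hle{d}$, then $\{h=1\}\cap\{g=1\}$ has at most $\min\{|\{h=1\}|,|\{g=1\}|\} \le d$ elements, so its indicator lies in $\Hle{d}$. Under the paper's convention (pairwise closure of members, with no requirement that $\X$ itself belong to the family) nothing more is needed.

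For the lower bound on $\VC$, I will use $|\X| \ge d$ to pick distinct points $x_1,\dots,x_d$. Every labeling $b \in \{0,1\}^d$ is realized by the indicator of $\{x_i : b_i = 1\}$, which has at most $d$ positives and so lies in $\Hle{d}$. Hence $\VC(\Hle{d}) \ge d$, and $\Ldim \ge \VC$ gives $\Ldim(\Hle{d}) \ge d$. When $d=0$ the class is just the zero concept and both dimensions are $0$, which is consistent.

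The main step is the upper bound $\Ldim(\Hle{d}) \le d$. I will argue that no mistake tree of depth $d+1$ can be shattered. Suppose such a tree is shattered, and follow the path that answers $1$ at every node. That path queries $d+1$ points $z_1,\dots,z_{d+1}$, and some $h \in \Hle{d}$ labels all of them $1$. These points must be pairwise distinct. If $z_j = z_i$ for some $i<j$, the branch that takes label $0$ at $z_j$ while keeping label $1$ at $z_i$ would be realized by no hypothesis, contradicting shattering. So $h$ has at least $d+1$ positives, which is a contradiction. As an alternative, the Standard Optimal Algorithm-style learner that predicts $0$ unless forced makes a mistake only on a new positive point, which gives at most $d$ mistakes directly.

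The only real obstacle is the repeated-point subtlety in the Littlestone tree, and the consistency argument above handles it. Infinite $\X$ causes no issue, since every argument involves only finitely many points.
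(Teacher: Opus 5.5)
Your proof is correct. Its outer structure matches the paper's:
\begin{itemize}
\item shatter any $d$ points to get $\VC \ge d$;
\item bound $\Ldim$ by $d$;
\item close the loop with $\VC \le \Ldim$;
\item get intersection-closedness from $|A \cap B| \le |A| \le d$.
\end{itemize}

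The real difference is in the step $\Ldim(\Hle{d}) \le d$. The paper never looks at trees. It exhibits the learner that predicts $1$ only at points already seen positive, notes that each of its mistakes reveals a new element of the positive set of $\hstar$, and appeals to $\Ldim$ being the optimal mistake bound. That is exactly your fallback argument.

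Your main argument works directly from the definition of a shattered tree. So it needs no appeal to the mistake-bound characterization; the paper in fact uses only its easy direction, that a shattered tree of depth $k$ forces $k$ mistakes on every learner. The price is that you must rule out repeated points by hand, which you do correctly via the sibling branch that labels the repeated point $0$.

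The two are the same argument packaged differently. Playing the paper's learner against a shattered tree sends the adversary down your all-ones path. A repeated point on that path is exactly where the adversary would have to step onto your unrealizable branch. The learner-based version absorbs repeats for free and displays the ``each positive spends one unit of a budget of $d$'' mechanism the paper builds on. Yours is self-contained from the definitions.
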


\begin{proof}
$\VC \geq d$: any $d$ points are shattered, since every subset of a $d$-set has size at most $d$.
$\VC \leq d$: the all-positive pattern on $d+1$ points would need $d+1$ positives. For
$\Ldim \leq d$, recall that $\Ldim$ \emph{is} the optimal mistake bound in the realizable online
model, so exhibiting any one learner with mistake bound $d$ suffices. Take the learner predicting $1$
only at points already known positive. Each mistake occurs where $\hstar$ is positive and reveals a
new element of $\hstar$'s positive set, which has at most $d$ elements, so it errs at most $d$ times.
With $\VC \le \Ldim$ this gives $\Ldim = \VC = d$, independent of $|\X|$. Intersection-closedness is immediate from
$|A \cap B| \le |A| \le d$.
\end{proof}

The mechanism is the intuition for the whole paper: \emph{every positive example spends one unit of
a budget of $d$.} A positive answer is irreversibly informative, so an adversary restricted to
adding correctly labeled points cannot manufacture ambiguity.

For $\Hprime$, Mehrotra proves $\VC(\Hprime) = \Ldim(\Hprime) = 2$ and the lower bound below. We
verified both computationally, constructing $PG(2,q)$ from normalized nonzero triples over $GF(q)$
for $q \in \{2,3\}$ (and the two-prime union), with $\Ldim$ certified by depth-bounded
shattered-tree search: depth $2$ is achievable, depth $3$ is not.

\subsection{The separation}

\begin{theorem}\label{thm:survive}
Against every randomized adaptive monotone adversary at every finite budget $m$ (hence for
$R^{\infty}_\D$), Closure on $\Hle{2}$ satisfies $\E[\Err_\D(\hat h,\hstar)] \leq 76/n$. The rate is $\Theta(1/n)$.
\end{theorem}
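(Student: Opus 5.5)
The upper bound comes from assembling results already in place, and the matching lower bound from a standard clean-sample argument.

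\emph{Upper bound.} By Lemma~\ref{lem:atmostd} with $d=2$, $\Hle{2}$ is intersection-closed with $\VC(\Hle{2})=2$, on any domain with at least two points. Corollary~\ref{cor:c2} then applies directly. It holds against every randomized adaptive monotone adversary, at every finite budget $m$, and hence for $R^{\infty}_\D$ by Remark~\ref{rem:sup}. It gives
\[
\E\big[\Err_\D(\hat h,\hstar)\big]\le \frac{21\cdot 2+34}{n}=\frac{76}{n}.
\]
The only point to check is that the hypotheses of \cite[Theorem 5]{Han16b} are met in the form we use. Two conventions need confirming. First, intersection-closedness here means closure under pairwise intersection of members, not necessarily containing $\X$. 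Second, Closure's output $\hat h_S$ must be well defined on the clean sample $S$, which holds because $\hstar\in V_{\Hle{2}}(S)$. The first is the step I expect to be the main obstacle. If Hanneke's statement needs $\X$ in the family, I would pass to the intersection-closure $\Hle{2}\cup\{\mathbf 1_\X\}$. This class is still intersection-closed, since $\X\cap B=B$. Its VC dimension is still $2$, by the same argument as Lemma~\ref{lem:atmostd}: a shattered triple $\{a,b,c\}$ would need some concept equal to $\{a,b\}$ on it, and $\X$ contains $c$, so no three points are shattered. Closure's output on this enlarged class coincides with its output on $\Hle{2}$ whenever the target lies in $\Hle{2}$, because $\mathbf 1_\X$ only enlarges the intersection and is never the unique minimal concept. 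So the constant is unchanged.

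\emph{Lower bound.} The adversary may append nothing, since $m=0$ is allowed. Under that choice any learner faces a clean i.i.d.\ problem, so it suffices to show a clean lower bound of $\Omega(1/n)$ for $\Hle{2}$, and I would use a two-point argument. Take two points $x_0,x_1$ with $\D(x_1)=1/n$ and $\D(x_0)=1-1/n$, and targets $\hstar\in\{\emptyset,\{x_1\}\}$, both in $\Hle{2}$. With probability $(1-1/n)^n\ge 1/4$ for $n\ge2$, the sample never contains $x_1$. On that event the learner's input has the same distribution under both targets. Its output is then wrong at $x_1$ for at least one of them, which costs error $1/n$. Averaging over a uniform choice of target gives $\E[\Err]\ge 1/(8n)$ for some target. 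Combined with the upper bound, the rate is $\Theta(1/n)$.
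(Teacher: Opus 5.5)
Your main line is the paper's proof. The upper bound is Lemma~\ref{lem:atmostd} plus Corollary~\ref{cor:c2} with $d=2$, and the lower bound comes from letting the adversary choose $m=0$. The one difference is that the paper cites~\cite{EHKV89} for the clean $\Omega(1/n)$ bound, whereas you prove it with a two-point argument; that argument is correct, and the restriction to $n\ge 2$ does not matter for a rate statement.

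The fallback you built for the convention issue does not work. $\Hle{2}\cup\{\mathbf 1_\X\}$ does not have VC dimension $2$:
\begin{itemize}
\item On any triple $\{a,b,c\}$, every pattern with at most two positives is realized by a member of $\Hle{2}$. In particular, the pattern $\{a,b\}$ you invoke is realized by the concept whose positive set is exactly $\{a,b\}$, not only by $\mathbf 1_\X$.
\item The all-positive pattern was the only obstruction in Lemma~\ref{lem:atmostd}, and it is now realized by $\mathbf 1_\X$.
\end{itemize}
So every $3$-set is shattered, and the VC dimension is $3$ whenever $|\X|\ge 3$. Applying Theorem 5 of~\cite{Han16b} to the enlarged class would therefore give $(21\cdot 3+34)/n=97/n$, not $76/n$. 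The paper avoids the issue by stating the pairwise-intersection convention in Section~\ref{sec:model} and citing~\cite{Han16b} for $\Hle{d}$ being intersection-closed, so Corollary~\ref{cor:c2} applies as written.

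If you want a route that does not depend on that convention, use Corollary~\ref{cor:scheme} instead. Closure on $\Hle{2}$ outputs the indicator of the distinct observed positive points, and there are at most two of them. It is therefore a consistent, permutation-invariant, superset-monotone compression scheme of size $k=2$ with permutation-invariant reconstruction. This gives $(21\cdot 2+34)/n=76/n$ directly.
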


\begin{proof}
$\Hle{2}$ is intersection-closed with $\VC = 2$ by Lemma~\ref{lem:atmostd}, so
Corollary~\ref{cor:c2} with $d=2$ gives $(21 \cdot 2 + 34)/n = 76/n$. The adversary may choose
$m=0$, so the rate is at least the clean minimax rate, which is $\Omega(1/n)$ for a class shattering
a pair~\cite{EHKV89}.
\end{proof}

Concretely, Closure on $\Hle{2}$ labels $1$ exactly the positive points it has seen. Its error
region is the set of not-yet-seen positives, of which there are at most two. Every insertion either
reveals a new positive, shrinking the error region, or is a negative or a repeat, changing nothing.
There is nothing the adversary can add that is both correctly labeled and unhelpful.

\begin{theorem}[{\cite[Theorem 3.3, case $d_L = 2$; Lemma 3.16]{Meh26}}]\label{thm:penalty}
For every $n \geq 1$ there is a finite budget $m_n$ such that every randomized, possibly improper
learner for $\Hprime$ has some target, distribution, and deterministic adaptive monotone adversary
with $\E[\Err_\D(\hat h,\hstar)] \geq \log(en)/(30n)$. With the matching $O(\log(en)/n)$ upper bound
by ERM~\cite{LPS26}, the rate for $\Hprime$ is $\Theta(\log(en)/n)$.
\end{theorem}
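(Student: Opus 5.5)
This statement is cited from Mehrotra~\cite{Meh26} rather than proved afresh, so the plan has two halves. The first is to check that the cited lower bound applies in the present model and with the stated constant. The second is to assemble the matching upper bound from~\cite{LPS26}.

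\emph{Lower bound.} Instantiate \cite[Theorem 3.3]{Meh26} in the case $d_L = 2$, whose hard class is exactly $\Hprime$ as defined above. I would recall the shape of the construction. Given $n$, choose a prime $q \approx n$ (Bertrand's postulate guarantees one in $[n,2n]$) and work in the component $PG(2,q)$. Put a distribution on checker coordinates and draw a target among the $u_p$ and $v_L$. Since the lower bound is stated against randomized learners, I would use Yao's principle to pass to a prior over targets and deterministic learners.

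The adversary sees the clean sample and then inserts correctly labeled checkers. These insertions are chosen so that the multiset the learner receives is consistent with many candidate targets, namely points $p$ on the observed lines. They also make the clean points statistically indistinguishable from the inserted ones after the shuffle. Lemma~3.16 supplies the quantitative step. Each of roughly $\log n$ dyadic scales of $\D$-mass contributes an $\Omega(1/n)$ error that the learner cannot resolve, and summing over the scales gives $\log(en)/(30n)$. The budget $m_n$ is whatever finite number of insertions the construction uses; it is finite because the component is finite.

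The step I expect to need care is verifying that the model of Problem~\ref{prob:model} matches the one in~\cite{Meh26} exactly. The points to check are the exact budget $m$ (padding with repeats handles a budget that is merely at most $m$), the learner's knowledge of $n$ and $m$, and the uniform shuffle. I would also check that the constant $1/30$ and the $\log(en)$ form survive for small $n$. For $n$ below the threshold where the asymptotic argument kicks in, I would fall back to the clean lower bound $\Omega(1/n)$ from~\cite{EHKV89} with $m=0$, adjusting the constant if needed.

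\emph{Upper bound.} By~\cite{LPS26}, any ERM over a class of VC dimension $d$ achieves $O((d/n)\log(en/d))$ against adaptive monotone adversaries. I would apply this with $d = \VC(\Hprime) = 2$, which Mehrotra proves and which was verified computationally above. This yields $O(\log(en)/n)$, uniformly in $m$.

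Combining the two bounds gives $\Theta(\log(en)/n)$. This exceeds the $\Theta(1/n)$ rate of $\Hle{2}$ in Theorem~\ref{thm:survive} by a $\Theta(\log n)$ factor, even though the two classes have equal VC and Littlestone dimensions.
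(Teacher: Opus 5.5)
Your top-level plan matches the paper's: the lower bound is invoked from \cite{Meh26} rather than reproved, and the upper bound is the ERM guarantee of \cite{LPS26} at $d = \VC(\Hprime) = 2$. At the level of citation that is fine. The trouble is the construction you sketch while ``checking'' the citation. It is wrong in exactly the place that produces the logarithm, so it would fail if you relied on it.

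The prime cannot be taken with $q \approx n$. The paper summarizes the mechanism as follows: the adversary pads two candidate targets $u_p$ and $v_L$, for an incident pair $p \in L$, to the \emph{identical} multiset. This leaves the posterior balanced on a single point of test mass $1/(q+1)$. The entire $\log(en)$ factor comes from choosing $q+1 = \Theta(n/\log(en))$.

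That is the coupon-collector scale. It makes the mass $1/(q+1)$ of the balanced point as large as possible while $n$ clean draws from $q+1$ equally weighted checkers still leave one unobserved with constant probability, which the adversary needs in order to build the padded multiset. With $q \approx n$ the balanced point has mass $\Theta(1/n)$, and you recover only the clean rate. Bertrand's postulate is the right tool, but applied to $[N, 2N]$ with $N \asymp n/\log(en)$.

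Relatedly, the story of $\log n$ dyadic scales of $\D$-mass, each contributing $\Omega(1/n)$, is not the mechanism. The error sits on one point, and the insertions make two specific targets produce the same multiset; they do not keep ``many candidate targets'' alive.

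Finally, your small-$n$ fallback of ``adjusting the constant if needed'' would prove a weaker statement than the one asserted. The statement fixes $\log(en)/(30n)$ for every $n \geq 1$, and that uniformity has to come from the cited theorem itself, not from patching with \cite{EHKV89}.
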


We invoke this rather than reproving it. Its mechanism is that the adversary can pad two candidate
targets $u_p, v_L$ to the \emph{identical} multiset, leaving the posterior balanced on a point of
test mass $1/(q+1)$ with $q+1 = \Theta(n/\log(en))$.

Theorem~\ref{thm:penalty} is consistent with Corollary~\ref{cor:c2} only because $\Hprime$ is
\emph{not} intersection-closed; had it been, Corollary~\ref{cor:c2} would have given it
$\Theta(1/n)$ and refuted the framework. We verified the failure explicitly: for distinct points
$p \neq p'$ on the line $L = pp'$, the intersection $u_p \cap u_{p'}$ has size $q-1$, and
$|u_p \cap v_L| = q$ for a flag $p \in L$, while the nonempty positive-set sizes realizable in the
class are exactly $q+1$ and $q(q+1)$. Since $1 \le q-1 < q < q+1$ for $q \geq 2$, neither
intersection belongs to the class.

\begin{corollary}[Dimension independence]\label{cor:sep}
$\Hle{2}$ and $\Hprime$ both have $\VC = \Ldim = 2$, with monotone-insertion rates $\Theta(1/n)$ and
$\Theta(\log(en)/n)$ respectively. Hence no function of VC and Littlestone dimension determines the
monotone-insertion rate.
\end{corollary}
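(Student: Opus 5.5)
The plan is to assemble Corollary~\ref{cor:sep} from results already in place; no new argument is needed beyond bookkeeping. The dimension claims come first. For $\Hle{2}$, Lemma~\ref{lem:atmostd} with $d=2$ gives $\VC(\Hle{2}) = \Ldim(\Hle{2}) = 2$ on any domain with at least two points. For $\Hprime$, we cite Mehrotra's computation $\VC(\Hprime)=\Ldim(\Hprime)=2$, which we have also checked by exhaustive shattered-tree search.

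Next come the rates. For $\Hle{2}$, Theorem~\ref{thm:survive} gives the upper bound $76/n$ for Closure, uniformly over budgets and adversaries. It also gives the matching $\Omega(1/n)$ lower bound, since the adversary may take $m=0$ and the clean minimax rate applies. Together these give $\Theta(1/n)$.

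For $\Hprime$, Theorem~\ref{thm:penalty} gives the lower bound $\log(en)/(30n)$ against every learner. The matching $O(\log(en)/n)$ upper bound comes from ERM via~\cite{LPS26}, so the rate is $\Theta(\log(en)/n)$.

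The final step is a short contradiction argument. Suppose some function $F$ of $(\VC,\Ldim)$ determined the monotone-insertion rate. Both classes have the pair $(2,2)$, so they would be assigned the same rate. But $\log(en)/n$ is not $O(1/n)$, since the ratio $\log(en)$ is unbounded. This contradicts the two rates just established.

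The only delicate point is making sure that ``rate'' means the same thing on both sides. It should be the minimax expected error over learners against the worst target, distribution, and adaptive adversary, with the budget allowed to depend on $n$. Under this definition, the upper bound for $\Hle{2}$ holds uniformly in $m$ (Remark~\ref{rem:sup}), and it therefore also dominates the finite budgets $m_n$ used in Mehrotra's lower bound for $\Hprime$. So the two statements are comparable, and this is the step I would state explicitly.
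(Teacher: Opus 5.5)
Your proposal is correct and follows the paper's route. The paper gives no separate proof: the corollary is read off from Lemma~\ref{lem:atmostd}, from Mehrotra's computation $\VC(\Hprime)=\Ldim(\Hprime)=2$, from the two-sided bound of Theorem~\ref{thm:survive}, and from Theorem~\ref{thm:penalty} together with the ERM upper bound. Your check that both rates are measured uniformly over budgets (so the $76/n$ bound also covers the budgets $m_n$) makes explicit what the paper leaves implicit in Remark~\ref{rem:sup}.
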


Two remarks. First, $d = 2$ is the right place for this: at $d = 1$ the rate is $\Theta(1/n)$
regardless~\cite[Theorem 3.1]{Meh26}, so there is nothing to separate. Second, the separation runs in
\emph{both} directions. Intervals on $\R$ are intersection-closed with $\VC = 2$, so by
Corollary~\ref{cor:c2} they retain the clean rate under arbitrarily many insertions, yet their
Littlestone dimension is \emph{unbounded}, growing as $\Theta(\log N)$ on an ordered domain of $N$
points (thresholds alone already give $\lfloor \log_2 N\rfloor$). So bounded Littlestone dimension
is neither necessary nor sufficient for surviving monotone insertions.

\begin{corollary}[A barrier, not a non-existence]\label{cor:barrier}
Every insertion-stable learner for $\Hprime$ has \emph{clean-sample} rate $\Omega(\log(en)/n)$.
Equivalently, on $\Hprime$ no learner is both insertion-stable and clean-rate-optimal.
\end{corollary}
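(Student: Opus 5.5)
The argument is a contrapositive that combines Theorem~\ref{thm:c1} with Theorem~\ref{thm:penalty}. Let $A$ be insertion-stable for $\Hprime$, and suppose its clean-sample rate satisfies $\sup_{\D,\hstar}\E[\Err_\D(A(S),\hstar)] \le B(n)$. We show that $B(n) \ge \log(en)/(30n)$ for every $n$, which is the claimed $\Omega(\log(en)/n)$.

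Fix $n$, and let $m_n$ be the budget from Theorem~\ref{thm:penalty}. That theorem applies to $A$, since $A$ is a (possibly randomized) learner, so it yields a target $\hstar$, a distribution $\D$, and a deterministic adaptive monotone adversary with
\[
\E[\Err_\D(A(T),\hstar)] \ge \log(en)/(30n).
\]
Theorem~\ref{thm:c1} holds for this same $(\D,\hstar)$ and adversary, and gives almost-sure domination on the common probability space. Taking expectations, as in Corollary~\ref{cor:transfer}:
\[
\frac{\log(en)}{30n} \le \E[\Err_\D(A(T),\hstar)] \le \E[\Err_\D(A(S),\hstar)] \le B(n).
\]
The middle expectation is over the clean i.i.d.\ sample $S$ and the learner's coins only. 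So for every $n$ there is a clean instance on which $A$ incurs expected error at least $\log(en)/(30n)$.

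For the ``equivalently'' clause, note that the clean minimax rate of $\Hprime$ is $O(1/n)$. This holds because $\VC(\Hprime)=2$ and the optimal PAC learner achieves $O(d/n)$ in expectation; we would cite the standard result rather than reprove it. Since $\log(en)/n = \omega(1/n)$, no insertion-stable learner can be clean-rate-optimal. Conversely, a learner that is both insertion-stable and clean-rate-optimal would contradict the first part.

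The main obstacle is a quantifier and bookkeeping issue. We need Theorem~\ref{thm:penalty}'s lower bound to hold against the specific learner $A$, and Theorem~\ref{thm:c1} to hold for the particular adversary it produces. Both are universal in the needed sense: the penalty theorem covers every learner, and the lemma covers every $\D$, every $\hstar$, and every finite budget. So the composition goes through pointwise in $n$.

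A second, smaller point concerns randomized learners. Definition~\ref{def:is} requires (ii) for each coin realization, and the domination in Theorem~\ref{thm:c1} is per realization, so averaging over coins is harmless. Finally, the statement is a barrier and not a non-existence claim: ERM attains $\Theta(\log(en)/n)$ under insertions, and nothing above rules out an insertion-stable learner that achieves exactly that rate.
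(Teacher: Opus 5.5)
Your proposal is correct and matches the paper's proof: you chain the almost-sure domination of Theorem~\ref{thm:c1}, taken in expectation as in Corollary~\ref{cor:transfer}, with the lower bound of Theorem~\ref{thm:penalty} for the same target, distribution and adversary. Your explicit handling of quantifiers and your justification of the ``equivalently'' clause via the $O(1/n)$ clean rate of a $\VC=2$ class fill in details the paper leaves implicit, though your ``conversely'' sentence restates the same direction rather than a converse, which is harmless since only that direction is needed.
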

\begin{proof}
Writing $\mathrm{CR}(A,n)$ for $A$'s clean-sample rate, Corollary~\ref{cor:transfer} gives
$\sup_m \E[\Err_\D(A(T))] \leq \mathrm{CR}(A,n)$, while Theorem~\ref{thm:penalty} gives
$\sup_m \E[\Err_\D(A(T))] \geq \log(en)/(30n)$. Chain the two.
\end{proof}

The stronger-sounding claim that \emph{no} insertion-stable learner exists for $\Hprime$ would be
false: the constant learner $A(L) \equiv h_0$ is insertion-stable on every class, its error region
never changing. Insertion-stability constrains how the error region \emph{moves}, never how
\emph{small} it is, so it never implies a good rate on its own; every rate statement here pairs it
with a clean-sample guarantee for a specific learner.

\subsection{The negative half, sharpened to a family of algorithms}\label{sec:noscheme}

Corollary~\ref{cor:barrier} rules out one property of one learner at a time. The following says more:
for $\Hprime$, an entire natural family of algorithms fails, at every size. This is the one result
here whose proof we do not think an expert reconstructs on sight, and it is stated about
\emph{schemes}, so no dimension is needed to say it.

\begin{theorem}[No monotone compression scheme certifies $\Hprime$]\label{thm:noscheme}
There is no $k < \infty$ and no learner for $\Hprime$ that is simultaneously consistent,
permutation-invariant, superset-monotone, and expressible as a sample compression scheme of size $k$
with permutation-invariant reconstruction.
\end{theorem}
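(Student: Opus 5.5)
The argument is by contradiction, chaining results already in place. Suppose some $k<\infty$ and a learner $A$ for $\Hprime$ are consistent, permutation-invariant, superset-monotone, and expressible as a size-$k$ compression scheme with permutation-invariant reconstruction. These are exactly the hypotheses of Corollary~\ref{cor:scheme}. That corollary gives, against every randomized adaptive monotone adversary and every finite budget $m$,
\[
\E\big[\Err_\D(\hat h,\hstar)\big]\le \frac{21k+34}{n}
\]
uniformly over targets and distributions. Note that Corollary~\ref{cor:scheme} itself routes through Theorem~\ref{thm:c1}: permutation-invariance plus superset-monotonicity makes $A$ insertion-stable. Then Hanneke's log-free compression bound~\cite[Theorem 3]{Han16b} applies to the clean i.i.d.\ sample $S$ and transfers to $T$ by Corollary~\ref{cor:transfer}.

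Against this, Theorem~\ref{thm:penalty} says that for every $n$ there are a budget $m_n$, a target, a distribution and a deterministic adaptive monotone adversary under which every learner, $A$ in particular, has
\[
\E\big[\Err_\D(\hat h,\hstar)\big]\ge \frac{\log(en)}{30n}.
\]
Chaining the two bounds at this budget gives $\log(en)\le 30(21k+34)$ for every $n\ge1$. This fails once $n > e^{30(21k+34)-1}$, so no finite $k$ works.

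The steps must be checked in the following order.

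\emph{Step 1: the hypotheses of Corollary~\ref{cor:scheme} match the theorem's.} In particular, \emph{consistent} here means consistent with every correctly labeled multiset. This holds because $\hstar\in V_{\Hprime}(L)$ is nonempty, so consistency is well defined on the adversarial input $T$ as well as on $S$.

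\emph{Step 2: the budget and adversary type in Theorem~\ref{thm:penalty} are covered by Corollary~\ref{cor:scheme}.} They are, since a deterministic adversary at a finite budget $m_n$ is a special case.

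\emph{Step 3: the quantifiers line up.} The lower bound produces a bad instance \emph{for each learner}, while the upper bound holds for \emph{all} instances, so evaluating both on the instance produced for $A$ is legitimate.

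The main obstacle is not the chaining but making sure the upper bound is genuinely free of logarithmic factors and holds uniformly in $m$. Otherwise the contradiction disappears, since an $O(k\log n/n)$ bound would be compatible with Theorem~\ref{thm:penalty}. Two points secure this. First, the bound is applied only to the clean sample, where Hanneke's monotone-compression argument is valid and log-free. Second, Theorem~\ref{thm:c1} provides domination at every realization, independent of $m$, so the growing budget $m_n$ in Mehrotra's construction cannot reintroduce $n$- or $m$-dependence.

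A secondary point is permutation-invariance of the \emph{reconstruction}. Hanneke's Theorem 3 needs the compressed set to be treated as an unordered multiset, so the theorem's hypothesis is exactly what is required and nothing more. Finally, the argument does not use that $\Hprime$ fails to be intersection-closed. That failure is explanatory only, and the theorem follows purely from the rate gap.
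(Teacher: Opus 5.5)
Your proposal is correct and matches the paper's proof: you combine the log-free, budget-uniform bound $(21k+34)/n$ from Corollary~\ref{cor:scheme} with the $\log(en)/(30n)$ lower bound that Theorem~\ref{thm:penalty} gives on the single fixed class $\Hprime$, then take $n > e^{30(21k+34)-1}$. Your checks on quantifier order, log-freeness and independence from $m$ are the same load-bearing points the paper lists after its proof.
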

\begin{proof}
Suppose such a learner existed for some finite $k$. Corollary~\ref{cor:scheme} would give it
$\E[\Err_\D] \leq (21k+34)/n$ against every adaptive monotone adversary at every finite budget. But
$\Hprime$ is a \emph{single fixed} class for which Theorem~\ref{thm:penalty} supplies, at every $n$,
an adversary forcing $\E[\Err_\D] \geq \log(en)/(30n)$ on every learner. Choosing $n$ large enough
that $\log(en)/30 > 21k+34$ contradicts the first bound. As $k$ was arbitrary, no finite $k$ works.
\end{proof}

Three features of this argument are load-bearing, and each fails if weakened.

\begin{itemize}
\item \textbf{The compression bound must be free of logarithmic factors.} The contradiction pits a
constant, $21k+34$, against a growing $\log(en)/30$. The generic sample-compression estimate
$O\big(k\log(n/k)/n\big)$ would instead \emph{exceed} Theorem~\ref{thm:penalty} for every $k \geq 1$
at every $n$, and the argument would collapse entirely. Hanneke's Theorem 3 is log-free, which is
precisely why the hypothesis list in Corollary~\ref{cor:scheme} is the one it is.
\item \textbf{The bound must not depend on the budget.} Theorem~\ref{thm:penalty}'s witnessing
budget grows rapidly with $n$, so a budget-dependent guarantee such as Mehrotra's own
$O(d(m{+}1)/n)$ is vacuous in the relevant regime. Superset-monotonicity is what removes $m$ from
the bound (Remark~\ref{rem:sup}).
\item \textbf{The class must be fixed, not $n$-dependent.} Theorem 3.5 of~\cite{Meh26} yields a
different finite class for each $n$, from which no statement about a single class would follow.
$\Hprime$'s union over primes is one class that is hard at every $n$, which is what licenses
``no finite $k$''.
\end{itemize}

\begin{table}[t]
\centering
\small
\setlength{\tabcolsep}{4.5pt}
\begin{tabular}{lccclc}
\toprule
Class & $\VC$ & $\Ldim$ & $\isdim$ & rate vs.\ adversary & \shortstack{stable\\optimal?} \\
\midrule
$\Hle{2}$ & $2$ & $2$ & $\leq 2$ & $\Theta(1/n)$, $\leq 76/n$ & yes\,$^{b}$ \\
$\Hprime$~\cite{Meh26} & $2$ & $2$ & $\infty^{a}$ & $\Theta(\log(en)/n)$ & no\,$^{a}$ \\
$\Hle{d}$ & $d$ & $d$ & $\leq d$ & $\Theta(d/n)$ & yes\,$^{b}$ \\
axis-aligned rectangles, $\R^p$ & $2p$ & $\infty$ & $\leq 2p$ & $\Theta(p/n)$ & yes\,$^{b}$ \\
monotone conjunctions, $\{0,1\}^p$ & $p$ & $p$ & $\leq p$ & $\Theta(p/n)$ & yes\,$^{b}$ \\
linear subspaces, $\dim k$ & $k$ & $k$ & $\leq k$ & $\Theta(k/n)$ & yes\,$^{b}$ \\
intervals on $\R$ & $2$ & $\infty$ & $\leq 2$ & $\Theta(1/n)$ & yes\,$^{b}$ \\
general intersection-closed & $d$ & --- & $\leq d$ & $\Theta(d/n)$ & yes\,$^{b}$ \\
halfspaces in $\R^p$~\cite{BHQS21} & $p{+}1$ & $\infty$ & $?^{c}$ & --- & $?^{c}$ \\
general $\VC = d$, worst case~\cite{Meh26} & $d$ & --- & $\infty^{a}$ &
$\Theta\big(\tfrac{d}{n}\log\tfrac{n}{d}\big)$ & no\,$^{a}$ \\
\bottomrule
\end{tabular}
\caption{The rate landscape. The first two rows are the payload: identical VC and Littlestone
dimension, rates differing by $\Theta(\log n)$. Rows four and seven run the separation the other way:
unbounded Littlestone dimension, clean rate retained. The last column asks whether the class admits an
insertion-stable learner \emph{attaining its clean rate}, not merely an insertion-stable learner,
since the constant learner $A(L)\equiv h_0$ is insertion-stable on every class and useful on none.
Provenance of each $\isdim$ and last-column entry: $^{a}$ from a rate lower bound via the
contrapositive of Corollary~\ref{cor:scheme} (in the last row, for the witness class attaining the
supremum); $^{b}$ from an explicit scheme, namely Closure (Proposition~\ref{prop:isdimupper});
$^{c}$ \textbf{open}. Failing to be intersection-closed defeats \emph{Closure} here, but does not
preclude some other superset-monotone permutation-invariant scheme, and we prove no rate lower
bound for halfspaces. That entry illustrates the limitation noted in
Section~\ref{sec:open}.}
\label{tab:landscape}
\end{table}

\begin{figure}[t]
\centering
\includegraphics[width=\textwidth]{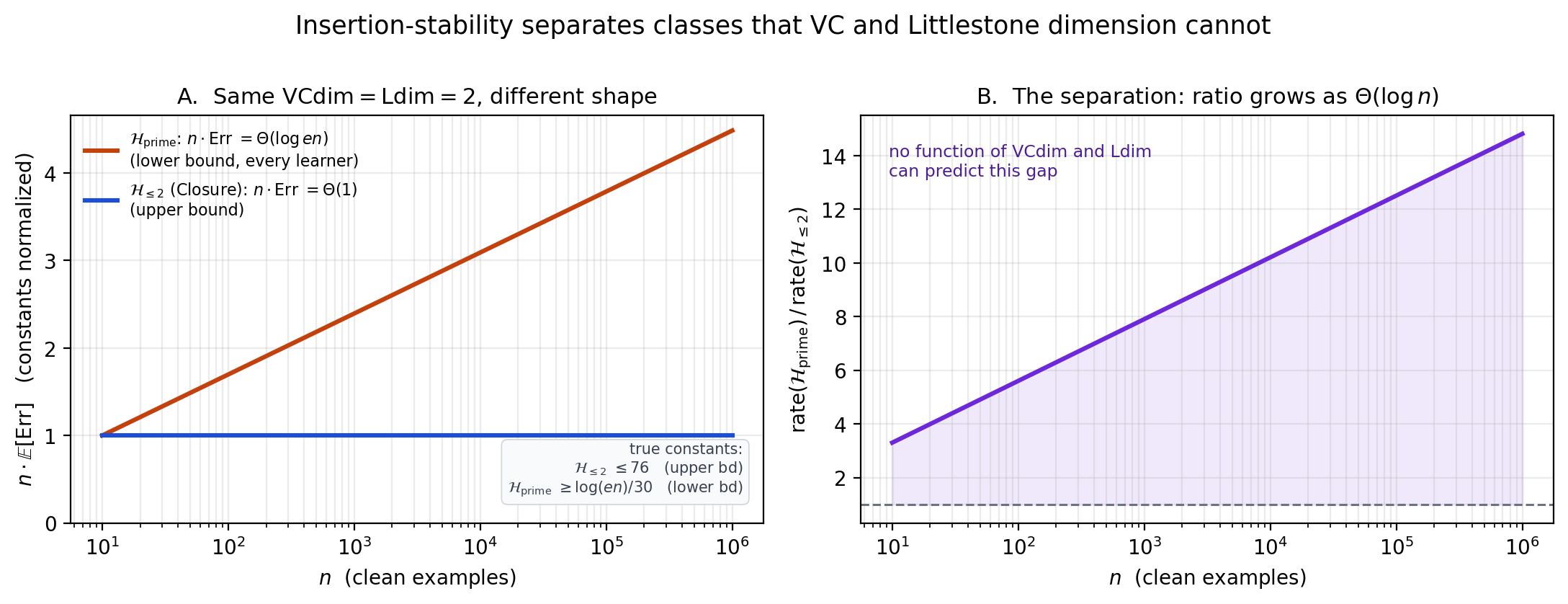}
\caption{Left: $\Hle{2}$ and $\Hprime$ have identical VC and Littlestone dimension but different
asymptotic shape, with constants normalized (the true bounds are $\leq 76$, an upper bound, and
$\geq \log(en)/30$, a lower bound; these are not pointwise comparable and their literal crossover
lies beyond any plottable range). Right: the ratio of rates, which is the content of
Corollary~\ref{cor:sep}, diverging as $\Theta(\log n)$.}
\label{fig:sep}
\end{figure}

\section{A candidate dimension, and why it is hard to compute}\label{sec:c4}

One may package Corollary~\ref{cor:scheme}'s hypotheses as a measure: let $\isdim(\Hy)$ be the least
$k$ admitting such a size-$k$ scheme, and $\infty$ if there is none. Then Corollary~\ref{cor:scheme}
reads $\E[\Err] \leq (21\isdim(\Hy)+34)/n$ at every finite budget, and Theorem~\ref{thm:noscheme} reads
$\isdim(\Hprime) = \infty$. On intersection-closed classes it is pinned down to a constant:

\begin{proposition}\label{prop:isdimupper}
For intersection-closed $\Hy$, writing $v = \VC(\Hy)$, asymptotically
\[
c_0\, v \;\leq\; \isdim(\Hy) \;\leq\; v ,
\]
so $\isdim(\Hy) = \Theta(v)$. Here $\Htwo$ denotes binary entropy and $c_0 \approx 0.2271$ solves
$\Htwo(c)+c=1$.
\end{proposition}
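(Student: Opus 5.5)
The two inequalities are proved separately. The upper bound $\isdim(\Hy) \le v$ comes from Closure itself. The lower bound $c_0 v \le \isdim(\Hy)$ is a counting argument that any sample compression scheme of size $k$ must pass. Combining them gives $\isdim(\Hy)=\Theta(v)$.

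\emph{Upper bound.} Closure is consistent on realizable samples, since $\hat h_L$ lies in $\bar\Hy=\Hy$ and agrees with $L$. It is permutation-invariant and superset-monotone by Proposition~\ref{prop:closure}. So it remains to express it as a compression scheme of size $v$ with permutation-invariant reconstruction. I would use the standard fact for intersection-closed classes (Helmbold--Sloan--Warmuth, as used in~\cite{Han16b}): the closure of the positive points of $L$ equals the closure of a \emph{spanning set} of at most $v$ of them. For the reconstruction map I would take ``return the closure of the compressed positives.'' This map depends only on the set of retained points, so it is permutation-invariant. One point needs checking: Closure as defined in \eqref{eq:closure} intersects over $V_\Hy(L)$, which also includes negative constraints. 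However, for an intersection-closed class the intersection of all concepts containing the positives of $L$ is itself a concept, or lies in the closure family. That concept is already consistent with the negatives, because $\hstar$ contains it. Hence the negatives do not change $P(L)$, and the compression may keep positives only. Minimality of a spanning set bounds its size by $v$, since a minimal spanning set is shattered.

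\emph{Lower bound.} I would count behaviours. Take a set $Z$ of $v$ points shattered by $\Hy$. A size-$k$ scheme with permutation-invariant reconstruction is a function of a labeled sub-multiset. Ignoring repetitions, which do not change the version space, this is a labeled subset of $Z$ of size at most $k$. The number of such subsets is $\sum_{j\le k}\binom{v}{j}2^j \le 2^{v(\Htwo(k/v)+k/v)}$ asymptotically. Consistency on every labeling of $Z$ requires $2^v$ distinct restricted outputs, so we need $\Htwo(k/v)+k/v \ge 1 - o(1)$. Since $\Htwo(c)+c$ is increasing on $[0,1/2]$, this forces $k/v \ge c_0 - o(1)$.

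\emph{Main obstacle.} The step most likely to fail is the counting when the compressed labels and the choice of which occurrences are kept interact. I have to make sure the reconstruction may not depend on side information; the hypothesis of permutation-invariant reconstruction is what rules this out. If side bits were permitted, I would instead fall back on the weaker bound $k \ge \Omega(v/\log v)$.
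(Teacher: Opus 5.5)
Your proposal is correct and follows essentially the same route as the paper: Closure compressed to a minimal spanning set of at most $v$ positives for the upper bound, and the same count $2^v \le \sum_{j\le k}\binom{v}{j}2^j$ over the $2^v$ labelings of a shattered set for the lower bound. The paper cites the spanning-set fact, whereas you prove directly that negatives leave $P(L)$ unchanged and that a minimal spanning set is shattered. One small correction: repetitions can be ignored not because they leave the version space unchanged (the reconstruction may depend on multiplicities, as Remark~\ref{rem:nolower} exploits), but because your samples list each point of the shattered set once, so every compressed sub-multiset is already a labeled subset.
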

\begin{proof}
\emph{Upper.} Closure satisfies the first three conditions by Proposition~\ref{prop:closure}, and for
every $L$ at most $\VC(\Hy)$ of the observed \emph{positive} points share its closure~\cite{AO07}, so
Closure on those points is a scheme of that size. \emph{Lower.} Let $C$ be shattered, $|C| = v$. The
$2^v$ samples listing $C$ under each labeling must receive $2^v$ distinct messages, since equal
messages give equal classifiers and one classifier cannot agree with two labelings of $C$. Counting
labeled subsets of size $\leq k$ gives $2^v \leq \sum_{j \leq k}\binom{v}{j}2^j$, whose dominant term
at $k = cv$ is $2^{v(\Htwo(c)+c)}$, forcing $\Htwo(c)+c \geq 1$.
\end{proof}

\begin{remark}[What this measure is and is not]\label{rem:nolower}
We state $\isdim$ for bookkeeping and deliberately build nothing on it, for three reasons a reader
should weigh before adopting it. \emph{First, its upper bound is immediate by construction rather than
by insight}: the defining conditions are verbatim the hypotheses of~\cite[Theorem 3]{Han16b}, so
``$\isdim(\Hy) = k$ implies rate $O(k/n)$'' restates that theorem. \emph{Second, it has no known
intrinsic evaluation method.} Every finite value we can supply comes from exhibiting Closure, and
every infinite value from a pre-existing rate lower bound (Theorem~\ref{thm:noscheme}); a class with
neither, such as halfspaces, cannot be placed at all, which is why
Table~\ref{tab:landscape} carries a ``$?$''. \emph{Third, the lower constant in
Proposition~\ref{prop:isdimupper} cannot be raised to $1$.} Take $\Hy$ to be all $2^3$ functions
on a $3$-point domain, an intersection-closed class with $\VC = 3$. Closure on this class is
consistent, permutation-invariant, and superset-monotone (Proposition~\ref{prop:closure} applies to
every class), and it admits a compression rule of size $2$: retain the underlying labeled set when
some point is unobserved; when all three points are observed and some label is $0$, drop one
zero-labeled point (reconstruction predicts $0$ at unretained points, so no collision arises); the
all-positive sample of size three retains nothing, and at larger sizes retains a repeated positive
occurrence, and these keys are selected by no other sample. Since
$2^3 = 8 > 7 = \sum_{j\leq 1}\binom{3}{j}2^j$ rules out size $1$, this class has $\isdim = 2 < 3 =
\VC$ exactly, with all four conditions of the definition verified exhaustively (consistency and
monotonicity over all correctly-labeled multiset pairs, compression bookkeeping over all samples up
to size six; the learner is a function of the underlying labeled set, so larger multiplicities
change nothing). More generally the compressor's \emph{choice} of retained points is itself a
channel worth up to $\log_2\binom{v}{j}$ bits, which defeats the tempting argument that some
unretained point forces a collision: the scheme may retain different subsets for different samples
precisely to avoid one.
\end{remark}

The separations of Section~\ref{sec:c3} can be read off this quantity, but nothing is gained by doing
so: they are Corollary~\ref{cor:sep} and Theorem~\ref{thm:noscheme}, statements about rates and
schemes that need no dimension.

\section{Discussion and open problems}\label{sec:open}

Our results are one-directional: $\isdim(\Hy) = k$ certifies the rate $O(k/n)$, but we do not show
that large $\isdim$ \emph{forces} a large rate. The $\isdim = \infty$ entries in
Table~\ref{tab:landscape} are derived \emph{from} known rate lower bounds via
Corollary~\ref{cor:scheme} rather than computed intrinsically. $\isdim$ currently
certifies good rates and merely records bad ones.

\begin{openproblem}
Is there a combinatorial quantity $q(\Hy)$, computable from $\Hy$ without reference to any rate,
such that $q(\Hy)$ large implies every learner suffers $\omega(1/n)$ under monotone insertions?
Ideally $q = \isdim$, giving a characterization.
\end{openproblem}

The obstacle is structural: $\isdim$ is defined by an \emph{existential over learners}, so proving
$\isdim(\Hy) = \infty$ directly means excluding every superset-monotone compression scheme of every
size. Every dimension in this literature that supports a lower bound, whether VC, Littlestone, the
extended threshold dimension of~\cite{DFHS26}, or inference dimension, is instead defined by an
explicit shattering or witness structure. This suggests seeking a shattering-style quantity first
and relating it to $\isdim$ afterwards. Mehrotra's construction indicates the shape: many pairs of
concepts agreeing except at a point of non-negligible mass, together with a correctly labeled
completion consistent with both. Any candidate must vanish on intersection-closed classes, or it
contradicts Corollary~\ref{cor:c2}. This is a cheap and strong test.

\begin{openproblem}\label{op:lower}
Which constants are optimal in $\isdim = \Theta(\VC)$ for intersection-closed classes?
Proposition~\ref{prop:isdimupper} gives $0.2271\,\VC \leq \isdim \leq \VC$, but neither bound is
known to be attained. Closure realizes compression size $\VC$, which certifies $\isdim \leq \VC$ and
never $\isdim = \VC$, since $\isdim$ minimizes over all schemes; and Remark~\ref{rem:nolower}
exhibits an intersection-closed class with $\isdim/\VC = 2/3$. Let $c_-$ and $c_+$ be the least and
the greatest value of the ratio $\isdim/\VC$ across such classes. The witness then gives
\[
0.2271 \;\leq\; c_- \;\leq\; 2/3 \;\leq\; c_+ \;\leq\; 1 ,
\]
and all four inequalities may be strict.
\end{openproblem}

\begin{openproblem}
What is $\isdim$ on classes that are neither intersection-closed nor of the $\Hprime$ type?
Proposition~\ref{prop:isdimupper} brackets it on intersection-closed classes and
Theorem~\ref{thm:noscheme} gives $\infty$ for $\Hprime$, but the middle of the landscape is
uncharted in both directions, and Remark~\ref{rem:nolower} explains why we have no method that
would settle a new case.
\end{openproblem}

Finally, Corollary~\ref{cor:transfer} yields high-probability guarantees for free wherever a clean
high-probability bound exists, which bears on Mehrotra's last stated open problem, the optimal
$(\varepsilon,\delta)$ sample complexity under monotone insertions; the matching lower bound is open.

\paragraph{Related work.} Beyond~\cite{LPS26,Meh26}, the closest methodological relative
is~\cite{DFHS26}, which studies online learning under a \emph{replay} adversary, introduces the
extended threshold dimension with matching bounds, uses a closure-based learner, and characterizes
proper learnability by being ``(almost) intersection-closed''; they also exhibit classes of constant
Littlestone dimension with unbounded extended threshold dimension. Their model is
online with corrupted \emph{feedback} that may be wrong, whereas ours is offline with corrupted
\emph{training data} that is always correct, and our risk is measured on the untouched clean
distribution; and they obtain matching bounds where we have only an upper bound. Their lower-bound
technique is the most promising source for the converse above. Also close
is~\cite{EG26}, which studies clean-label adversarial injections with abstention and introduces a
certificate dimension via ``robust witnesses'', compact labeled subsets certifying predictions
while resisting contamination, structurally close to our compression schemes. Notably their positive
results \emph{require} abstention, without which halfspaces are not learnable~\cite{BHQS21}; the
monotone constraint lets Closure predict everywhere and still be immune. The name ``monotone
adversary'' comes from semirandom models in combinatorial optimization~\cite{Fei21}, where
``helpful'' modifications likewise raise recovery thresholds.

\end{document}